\def\eqref#1{equation~\ref{#1}}
\def\1{\bm{1}}
\DeclareMathAlphabet{\mathsfit}{\encodingdefault}{\sfdefault}{m}{sl}
\SetMathAlphabet{\mathsfit}{bold}{\encodingdefault}{\sfdefault}{bx}{n}
\newtheorem{theorem}{Theorem}[section]
\title{SS-MAIL: Self-Supervised Multi-Agent \\Imitation Learning}
\author{
Akshay Dharmavaram$^1$, Tejus Gupta$^1$, Jiachen Li$^2$, Katia P. Sycara$^1$\\
$^1$Carnegie Mellon University, $^2$UC Berkeley\\

}
\begin{document}

\maketitle

\begin{abstract}
The current landscape of multi-agent expert imitation is broadly dominated by two families of algorithms - Behavioral Cloning (BC) and Adversarial Imitation Learning (AIL). BC approaches suffer from compounding errors, as they
ignore the sequential decision-making nature of the trajectory generation problem. Furthermore, they cannot effectively model multi-modal behaviors. 
While AIL methods solve the issue of compounding errors and multi-modal policy training, they are plagued with instability in their training dynamics. 
In this work, we address this issue by introducing a novel self-supervised loss that encourages the discriminator to approximate a richer reward function. We employ our method to train a graph-based multi-agent actor-critic architecture that learns a centralized policy, conditioned on a learned latent interaction graph. 
We show that our method (SS-MAIL) outperforms prior state-of-the-art methods on real-world prediction tasks, as well as on custom-designed synthetic experiments.  We prove that SS-MAIL is part of the family of AIL methods by providing a theoretical connection to cost-regularized apprenticeship learning. Moreover, we leverage the self-supervised formulation to introduce a novel teacher forcing-based curriculum (Trajectory Forcing) that improves sample efficiency by progressively increasing the length of the generated trajectory. The SS-MAIL framework improves multi-agent imitation capabilities by stabilizing the policy training, improving the reward shaping capabilities, as well as providing the ability for modeling multi-modal trajectories.
\end{abstract}

\section{Introduction}
\begin{wrapfigure}{r}{0.4\linewidth}
\vspace{-14pt}
\includegraphics[width=\linewidth]{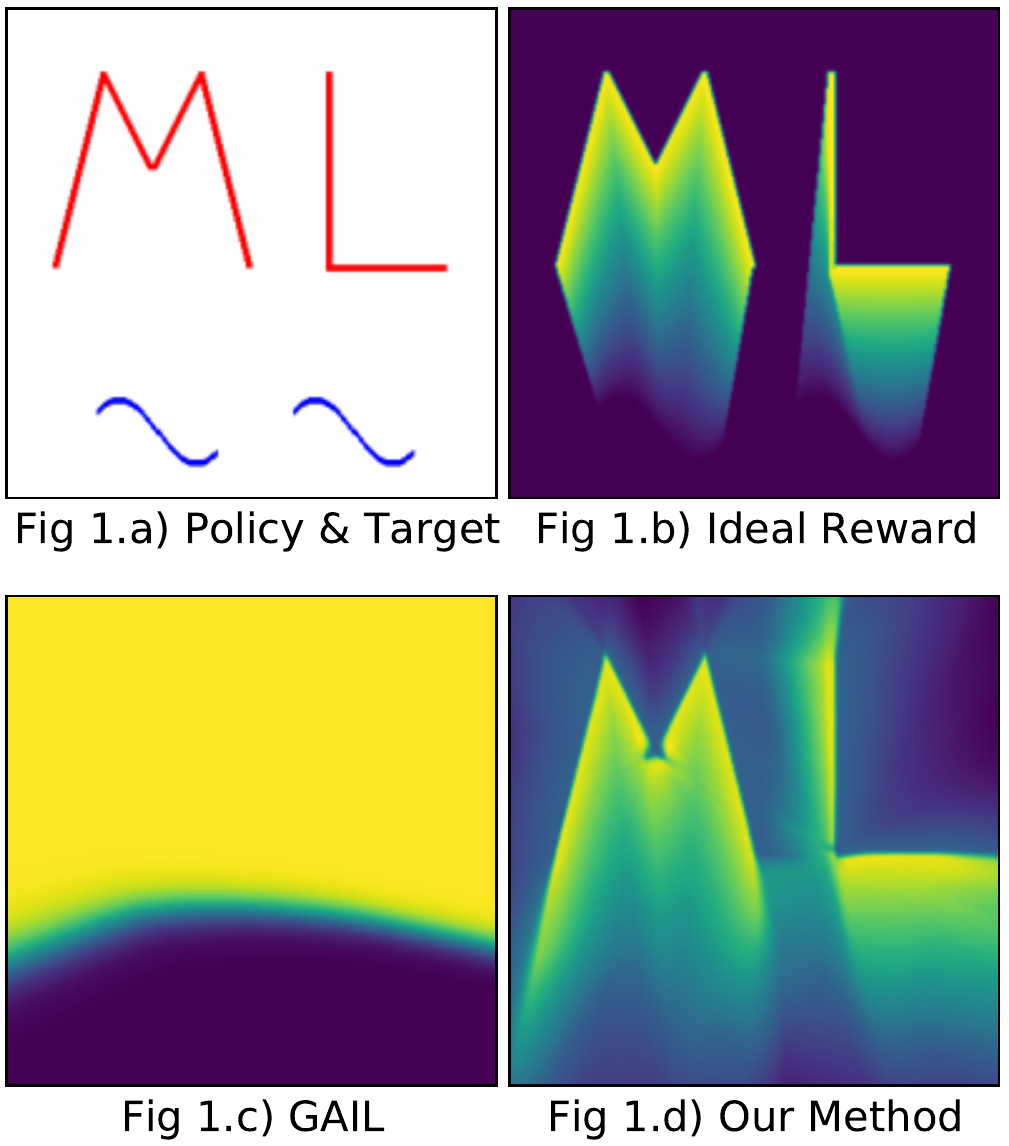}
\caption{(a) An example task that starts with an initial policy shown in blue, and aims to imitate the expert trajectories shown in red. (b)-(d) show the reward landscapes of the ideal reward function, the learned discriminator in GAIL, and our method (SS-MAIL), respectively.}
\vspace{-18pt}

\end{wrapfigure}

Training an agent to imitate an expert is a promising approach to learning intelligent behavior and can be used in applications such as autonomous driving and robotic manipulation.
More specifically, the ability of the agent to robustly learn optimal policies in real-world scenarios is a current challenge facing the field. 
The most promising approaches for imitation learning are Behavioral Cloning (BC) and Adversarial Imitation Learning (AIL). BC methods have been shown to produce compounding errors \citep{ross2011reduction}, which makes it unsuitable for complex applications. 
Adversarial Imitation learning methods, such as GAIL \citep{ho2016generative}, iteratively train a discriminator, and use it as a proxy reward function for updating a policy. We show that the proxy reward function learned by GAIL fails to provide dense supervision for policy updates and leads to inefficient and unstable training.

Let's consider the example task of training two agents to draw the planar letters ``ML'' on a piece of paper, with the expert trajectories shown in red, and the initial policy shown in blue in Fig. 1(a). Training an RL agent to imitate the expert policy would ideally require a rich reward landscape with a clear gradient starting from the current policies, in the lower half of the image, and terminating at the expert sketches, as depicted in Fig. 1(b). However, the reward function learned by GAIL is almost constant throughout the state space, while abruptly changing at the decision boundary, as depicted in Fig. 1(c). In theory, GAIL would be able to learn the optimal policy with sufficient exploration; however, in practice, the training dynamics resulting from the sparse rewards in the local neighborhood of the current policy lead to sub-optimal policies. 
We see that apart from a minuscule sliver of the state space, the agent is left in the dark when it comes to a prospective policy update.
This leads us to wonder how the agents will ever learn to imitate the expert if they are left to explore in the dark. There have been attempts such as WAIL \citep{xiao2019wasserstein} to address this issue; however, it is also plagued with unstable critic-training, which we will elaborate upon in the upcoming sections. To briefly motivate the rest of the paper, we tease the results of our method (SS-MAIL) in Fig. 1(d), and we will explain in the upcoming sections the details of the framework that helped us get these results.



The problem illustrated in the toy example above would be exacerbated in larger state spaces. We see that despite the recent progress in Adversarial Imitation Learning (AIL), the application of methods such as GAIL in scenarios with larger state spaces may be impeded by the exponential cost of exploring in ever-increasing hyper-spaces. The exploding cost of exploration would make tasks such as self-driving and robotic manipulation infeasible. These shortcomings are addressed in the SS-MAIL method introduced in this work.

\section{Background: Imitating Expert Multi-Agent Trajectories}

In this work, we consider sequential decision making problems, and we model them using the framework of Markov decision processes (MDP). An MDP can be represented as a tuple $\left(\mathcal{S}, \mathcal{A}, \mathcal{P}, r, \rho_{0}, T\right)$ with state-space $\mathcal{S}$, action-space $\mathcal{A}$, dynamics $\mathcal{P}: \mathcal{S} \times \mathcal{A} \times \mathcal{S} \rightarrow[0,1]$, reward function $r(s, a)$, initial state distribution $\rho_{0}$, and horizon $T .$ 

Imitation learning \citep{osa2018algorithmic} methods aim to learn task policies directly from expert demonstrations. The family of imitation learning algorithms is broadly divided into two classes, behavioral cloning (BC) \citep{bain1995framework} and Adversarial Imitation Learning (AIL) \citep{fu2017learning}. BC directly regresses from the expert's states to its decisions; however, such a straightforward supervised learning approach ignores the sequential nature of the problem and policy errors cascade during execution. 

Adversarial IL methods formulate the imitation learning problem as an adversarial game between the policy and the discriminator. The discriminator measures some divergence between the expert and policy's state-action distribution, and the policy aims to fool this discriminator. For instance, GaIL \citep{ho2016generative} minimizes the JS divergence between the expert and policy's state distribution. Their algorithm implements a min-max optimization procedure, 
\begin{equation}
\min _{\pi \in \Pi} \max _{D \in(0,1)^{S \times \mathcal{A}}}-\lambda_{H} H(\pi)+\mathbb{E}_{\pi}[\log (D(s, a)]+ \mathbb{E}_{\pi_{E}}[\log (1-D(s, a))]
\end{equation} 

AIL methods provide a compelling approach to imitation since they do not face the issue of compounding errors. However, such adversarial methods are known to be unstable to train. Our work aims to deal with this instability while retaining the advantages of AIL methods.

Previous work for multi-agent behavior prediction such as NRI \citep{kipf2018neural} and DNRI \citep{graber2020dynamic} model the expert's behavior using a VAE \citep{kingma2013auto} in which the encoder and the decoder networks are modeled as GNNs \citep{kipf2016semi} and the latent code represents the intrinsic interaction graph. Since their network architecture is designed to represent and work on the interaction graph of the agents, they can be used to understand the relationships between the agents qualitatively. These models are trained using multi-step BC and, therefore, cannot model multi-modal behavior. This is a significant drawback since real-world behavior is often multi-modal. Our work builds upon their policy architecture by training them using an actor-critic algorithm in an AIL setting.

\pagebreak
\section{Method}
In this section we will introduce SS-MAIL, and its components, namely: \textbf{1) SS-AIL: } a self-supervised AIL method to train the Discriminator, \textbf{2) MAIL: } an off-policy graph convolutional actor-critic architecture that imitates the expert using supervision from the Discriminator, \textbf{3) Trajectory Forcing: } a teacher forcing based curriculum generator for stabilized training dynamics. 
\subsection{SS-AIL: Self-Supervised Generative Adversarial Imitation Learning}

\subsubsection{From Apprenticeship Learning to SS-AIL}
\citet{ho2016generative} show that GAIL is a form of apprenticeship learning with a specific cost regularization, which optimizes the min-max objective: 
\begin{align}
\psi_{\mathrm{GA}}^{*}\left(\rho_{\pi}-\rho_{\pi_{E}}\right)=\max _{D \in(0,1)^{\mathcal{S}} \times \mathcal{A}} \mathbb{E}_{\pi}[\log (D(s, a))]+\mathbb{E}_{\pi_{E}}[\log (1-D(s, a))]
\end{align}
In practise, the rewards from the discriminator do not provide rich policy-gradients, which result in sub-optimal policies. We address this issue by introducing a self-supervised loss function for our discriminator that encourages it to provide an enriched reward signal, not only in the local neighborhood of the current trajectory, but globally as well. 

Our self-supervised loss is presented in Eq. 3, below. We sample $\alpha \in [-1,1]$ and take the weighted average of the trajectories resulting from the policy and the expert, specifically $\tau_{G} \sim \pi$ and $ \tau_{E} \sim \pi_{E}$. This new trajectory can be mathematically formulated as $\tau_{\alpha} = \alpha \tau_{G} + (1-\alpha) \tau_{E}$. Our intuition behind creating the self-supervised loss is that the state action pairs of $\tau_\alpha$ should regress smoothly based on their relative distance to $\tau_G$, or in other words $\mathbb{E}_{\tau_{\alpha}}[D_\theta (s, a)] = \alpha$. Thus, we convert the Binary Cross Entropy loss to a Mean Squared Error (MSE) loss and use the sampled $\alpha$ values as self-supervised labels.
\begin{align}
\psi_{\mathrm{SS}}^{*}\left(\rho_{\pi}-\rho_{\pi_{E}}\right)= \mathbb{E}_{\pi, \pi_E, \alpha}\left[ 
\underbrace{\left(0-D(s_G,a_G)\right)^2}_{\text{Generated MSE}} + 
\underbrace{\left(1 - D(s_E,a_E)\right)^2}_{\text{Expert MSE}} + \right.
\left. \underbrace{\left(\alpha - D(s_\alpha,a_\alpha)\right)^2}_{\text{Self-Supervised MSE}}  
\right] 
\end{align}

An interesting consequence of this formulation is that the third term in the self-supervised optimization objective, the self-supervised MSE term, serves as a natural from of exploration during the policy optimization stage.

\begin{theorem} SS-AIL is an instantiantion of cost-regularized apprenticeship learning, i.e. the policy at the saddle point $(\pi, D)$ of the min-max problem described above is a solution to $\min_\pi \max_c ( -\psi(c) + E_{\pi} [c(s,a)] - E_{\pi_E}[c(s,a)])$ for some specific $\psi$.
\end{theorem}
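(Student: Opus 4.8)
The plan is to follow the apprenticeship-learning duality of \citet{ho2016generative}: once we exhibit a cost regularizer $\psi_{\mathrm{SS}}$ whose Fenchel conjugate, evaluated at the occupancy-measure difference $\rho_\pi - \rho_{\pi_E}$, reproduces the inner (discriminator) optimization of SS-AIL, the claim follows almost immediately. Indeed, using the occupancy-measure identity $\mathbb{E}_\pi[c(s,a)] = \langle \rho_\pi, c \rangle$, the inner maximization in $\min_\pi \max_c \left( -\psi(c) + \mathbb{E}_\pi[c(s,a)] - \mathbb{E}_{\pi_E}[c(s,a)] \right)$ is, by definition of the conjugate, exactly $\psi^*(\rho_\pi - \rho_{\pi_E})$. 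Hence it suffices to show that $\max_D V(\pi,D) = \psi_{\mathrm{SS}}^*(\rho_\pi - \rho_{\pi_E})$, where $V(\pi,D)$ is the negated mean-squared-error objective of the self-supervised loss that the discriminator maximizes and the policy minimizes; the two minimization problems then share the same minimizing policy, which is the saddle-point policy.

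First I would introduce the change of variables $c(s,a) = -D(s,a)^2$, a bijection between discriminators $D \in (0,1)^{\mathcal{S}\times\mathcal{A}}$ and costs $c \in (-1,0)^{\mathcal{S}\times\mathcal{A}}$ with inverse $D = \sqrt{-c}$. This choice makes the generated term linear in the policy occupancy, since $-\mathbb{E}_\pi[D^2] = \langle \rho_\pi, c \rangle$. Re-expressing the expert term around this same cost yields the decomposition $V(\pi,D) = \langle \rho_\pi - \rho_{\pi_E}, c \rangle - \psi_{\mathrm{SS}}(c)$, where $\psi_{\mathrm{SS}}(c) = \mathbb{E}_{\pi_E}\!\left[ D^2 + (1-D)^2 \right] + \mathbb{E}_{\alpha,\tau_\alpha}\!\left[ (\alpha - D(s_\alpha,a_\alpha))^2 \right]$ with $D = \sqrt{-c}$. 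Since, as $c$ ranges over its domain, $D=\sqrt{-c}$ ranges bijectively over $(0,1)^{\mathcal{S}\times\mathcal{A}}$, taking the supremum gives $\psi_{\mathrm{SS}}^*(\rho_\pi - \rho_{\pi_E}) = \sup_c \langle \rho_\pi - \rho_{\pi_E}, c \rangle - \psi_{\mathrm{SS}}(c) = \max_D V(\pi,D)$ directly, so the saddle-point policy minimizes $\psi_{\mathrm{SS}}^*(\rho_\pi - \rho_{\pi_E})$ and therefore solves the stated cost-regularized problem. A secondary check is that the expert contribution to $\psi_{\mathrm{SS}}$ is convex in $c$, since it equals the affine map $1 - 2c$ plus the convex map $-2\sqrt{-c}$, which keeps the interpretation as a genuine conjugate pair intact.

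The hard part will be the self-supervised term, which is the only place the reduction is not automatic. Unlike the generated and expert terms, the interpolated samples induced by $\tau_\alpha = \alpha \tau_G + (1-\alpha)\tau_E$ depend on the policy $\pi$, so taken literally $\psi_{\mathrm{SS}}$ is not policy-independent, violating the requirement that the cost regularizer in apprenticeship learning depend only on the fixed expert. My plan to resolve this is to exploit that the theorem concerns only the saddle point: I would freeze the interpolation reference at the saddle-point policy $\pi$, define $\psi_{\mathrm{SS}}$ with respect to that fixed measure, and then verify via the first-order saddle condition that the same $\pi$ remains the minimizer of $\psi_{\mathrm{SS}}^*(\rho_\pi - \rho_{\pi_E})$ once the reference is frozen. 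An alternative I would explore is to treat the interpolation law as a third fixed sampling measure, in the spirit of gradient-penalty regularizers for Wasserstein GANs; then the self-supervised mean-squared-error is merely an additional convex penalty on $c$ that folds into $\psi_{\mathrm{SS}}$ without disturbing the bilinear coupling $\langle \rho_\pi - \rho_{\pi_E}, c \rangle$. Either route collapses the statement onto the conjugate-duality argument above, and the delicate point to get right is precisely the policy-independence (and convexity) of $\psi_{\mathrm{SS}}$ so that its conjugate genuinely recovers the discriminator objective.
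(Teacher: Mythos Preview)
Your high-level strategy---invoke the Ho--Ermon duality so that the inner cost maximization becomes $\psi^*(\rho_\pi-\rho_{\pi_E})$ and then identify the SS-AIL discriminator objective with that conjugate---is exactly what the paper does. However, the paper's argument is considerably thinner than yours: it simply \emph{declares} the self-supervised objective of Eq.~(3) to be $\psi_{\mathrm{SS}}^*(\rho_\pi-\rho_{\pi_E})$, cites the equivalence $\min_\pi\max_c(-\psi(c)+\mathbb{E}_\pi[c]-\mathbb{E}_{\pi_E}[c])=\min_\pi\psi^*(\rho_\pi-\rho_{\pi_E})$, and concludes. There is no explicit construction of $\psi_{\mathrm{SS}}$, no change of variables $c=-D^2$, and no convexity check; your proposal is already more detailed than the paper's own proof.

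The delicate point you flag---that the self-supervised term depends on $\pi$ through the interpolated trajectories $\tau_\alpha$, so the putative regularizer is not a function of $c$ alone---is a genuine issue, and the paper does not address it. It writes the entire objective as a function of $\rho_\pi-\rho_{\pi_E}$ without justifying that the $\tau_\alpha$ contribution factors through that difference. Your two proposed repairs (freezing the interpolation reference at the saddle policy, or treating the interpolation law as a fixed third sampling measure in the spirit of a gradient penalty) are both reasonable and go beyond what the paper offers. In short: your route matches the paper's, your execution is more careful, and the gap you worry about is present in the paper's argument as well, not a defect particular to your plan.
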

\begin{proof}
Refer to the appendix.
\end{proof}

\begin{algorithm}[H]
\caption{SS-AIL}\label{alg:cap}
\begin{algorithmic}
\State $\textbf{Input: } \text{Expert Trajectories } \tau_{E} \sim \pi_{E}, \text{Initial Policy } \pi_{\phi}, \text{Initial Discriminator } D_{\theta}$
\State $\textbf{Initialize: } \text{Policy } \pi_{\phi}, \text{Discriminator } D_{\theta}$
\While{Policy Improves}
    \State $\text{Sample Trajectories } \tau_{G} \sim \pi_{\phi}$
    \State $\text{Sample } \alpha \text{ and Compute: } \tau_{\alpha} = \alpha \tau_{G} + (1-\alpha) \tau_{E}$
    \State $\text{Update } D_{\theta} \text{ using gradient:}$
    \begin{equation}
        \mathbb{E}_{\tau_{G}}\left[\nabla_{\theta}  \left(D_{\theta}(s, a)\right)^2\right]+\mathbb{E}_{\tau_{E}}\left[\nabla_{\theta} \left(1-D_{\theta}(s, a)\right)^2\right] + \mathbb{E}_{\tau_{\alpha}, \alpha}\left[\nabla_{\theta} \left(\alpha-D_{\theta}(s, a)\right)^2\right]   \nonumber
    \end{equation}
    \State $\text {Update } \phi \text { with SAC to increase the following objective: } \mathbb{E}_{\tau_{i} \sim \{ \tau_{G},\tau_{\alpha}\}}\left[D_{\theta}(s, a)\right]$
\EndWhile
\end{algorithmic}
\end{algorithm}
\subsubsection{Algorithm}
In the SS-AIL algorithm we start by sampling a trajectory from the current policy. Next, we compute the weighted average between the current trajectory and a sampled expert trajectory. We denote this trajectory as $\tau_\alpha$. According to our Discriminator update rule, derived previously, we train our discriminator to learn a smooth reward function by reconstructing the self-supervised labels, $\alpha$. Next, we need to update our policy function. 

In GAIL, \citet{ho2016generative} use the on-policy TRPO update rule. However, according to the min-max function derived above, our policy should be optimized on not only the current trajectory, but also $\tau_\alpha$. This serves as an inherent exploration term for our policy training, as we are exploring states that are disparate from the current trajectory. Thus, we use the SAC update policies to accommodate the off-policy updates from $\tau_\alpha$.

We formulate our update rules such that the discriminator converges faster than the actor, similar to the two-timescale approach in the actor-critic setting. Thus, we no longer require balancing the discriminator, as seen in GAIL. Also, the change in our Discriminator values converges to zero, as opposed to WAIL, which constantly updates the fluid surface of the Discriminator output, as there is no grounding of the outputs to any specific value.

\subsubsection{Self-Supervision as a form of Reward-Shaping}
\begin{wrapfigure}{r}{0.25\linewidth}
\vspace{-10pt}
\includegraphics[width=\linewidth]{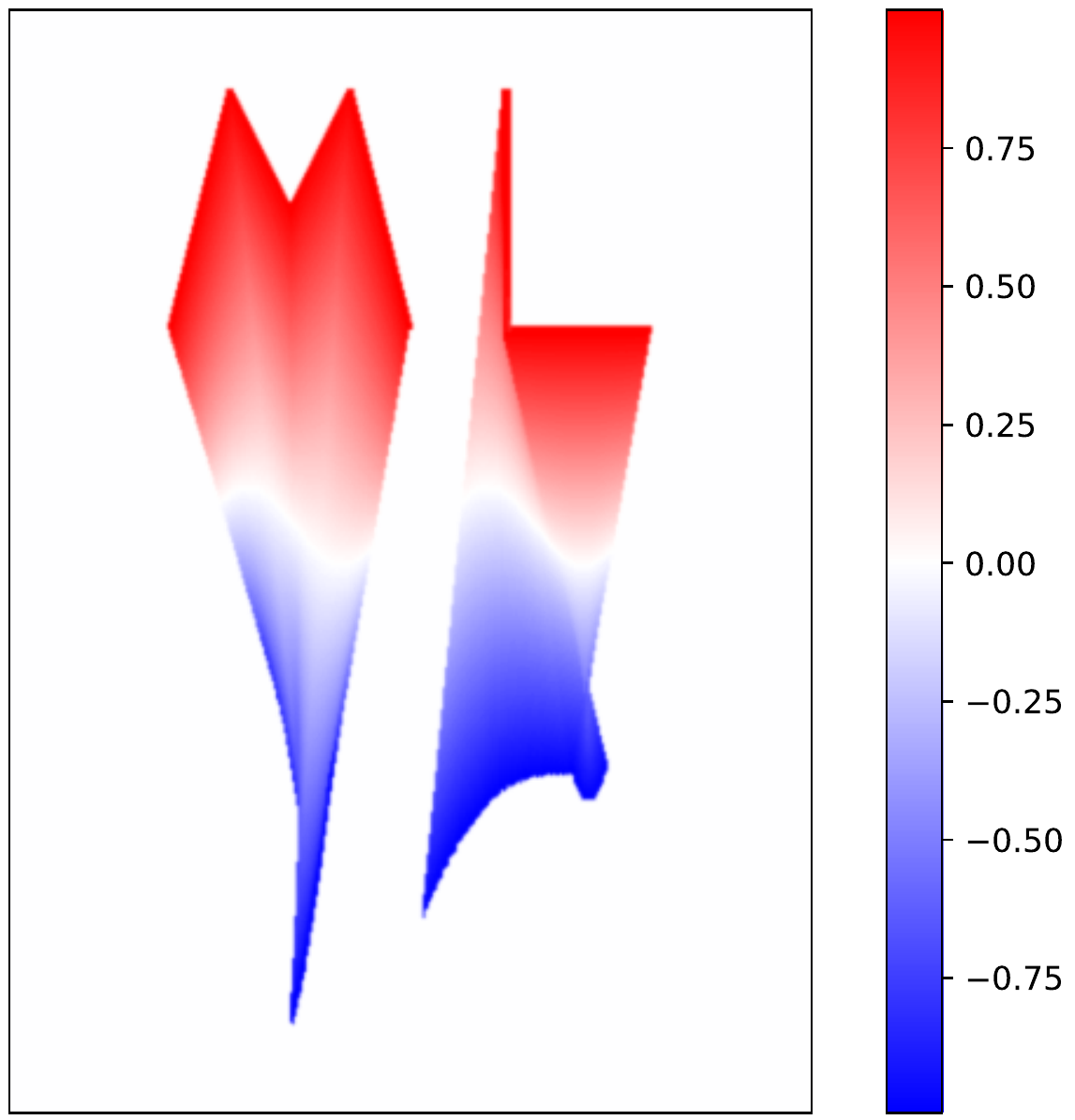}
\caption{Vector field induced by negative $\alpha$.}
\end{wrapfigure}
Our novel self-supervised loss gives us considerable flexibility in the sampling approaches used to obtain $\alpha$. 
As stated previously in our discussion on the discriminator approximating a vector field, we noticed that only taking $\alpha$ values from [0,1] would be insufficient for training as during exploration, the agent would explore states that would naturally correspond to negative $\alpha$ values. As the discriminator was not trained on these states, generalization isn't guaranteed. Thus, to complete the neighborhood around the current policy trajectory, we also consider negative $\alpha$ values. We show an empirical analysis in the next section. The negative $\alpha$ values assist in inducing a complete vector field around the neighborhood of our trajectory, as shown in Fig 3.

We can custom design the sampling approach and the loss functions to suit our need, which will open up greater flexibility and control for the community to experiment with. We elaborate in the Appendix.

\subsection{MAIL: Multi-Agent Imitation Learning}
In this section we introduce the architecture of the policy network, which is divided into two parts - the graph encoder and the graph soft actor-critic (G-SAC). In our architectures, we use the message passing standard established by \citet{kipf2018neural}, and model the node-to-edge and edge-to-node message passing operations as follows:
\begin{align}
\mathbf{h}_{(i, j)}^{l} &=f_{e}^{l}\left(\left[\mathbf{h}_{i}^{l}, \mathbf{h}_{j}^{l}, \mathbf{s}_{(i, j)}\right]\right); &
\mathbf{h}_{j}^{l+1} =f_{v}^{l}\left(\left[\sum_{i \in \mathcal{N}_{j}} \mathbf{h}_{(i, j)}^{l}, \mathbf{s}_{j}\right]\right)
\end{align}

where the subscripts of $(i,j)$ represent edge related features, while subscripts denoted with single letters, such as $i$ and $j$ represent node related features. $h$ represents the corresponding edge or node embedding, and $s$ represents states. Finally, $f$ represents a neural network function approximator, and $\mathcal{N}_{j}$  corresponds to indices of the neighboring nodes connected to the node indexed by the subscript, which in this example is $j$.

\subsubsection{Graph Encoder}
The goal of the Graph Encoder is to infer the underlying interaction graph of the various agents, conditioned upon the observed history. This latent interaction graph will be used as weights in the graph convolution step in G-SAC. Our Graph Encoder is based on the encoder network \citet{graber2020dynamic}. We employ a fully connected graph convolutional network to output a distribution over edge weights. However, unlike in DNRI, we do not include prior networks that train for precognition of the future evolution of the graph based on the observed history.  Our model takes in as input the cumulative observable state space, and employs an LSTM to keep track of the history. We model this as follows:
\begin{align}
\mathbf{h}_{(i, j), \mathrm{enc}}^{t+1} &=\mathrm{LSTM}_{\mathrm{enc}}\left(\mathbf{h}_{(i, j)}^{t}, \mathbf{h}_{(i, j), \mathrm{enc}}^{t}\right), \\
\underbrace{q_{\phi}\left(\mathbf{z}_{(i, j)}^{t} \mid \mathbf{x}\right)}_{\text{Graph Encoder Output}}=& \operatorname{softmax}\left(f_{\mathrm{enc}}\left(\mathbf{h}_{(i, j\rangle, \mathrm{enc}}^{t}\right)\right)
\end{align}

\subsubsection{Graph Soft Actor-Critic (G-SAC)}
G-SAC samples a dynamically computed interaction graph from the Graph Encoder and uses the graph as the activations for the respective edges among the nodes of the graphs. It then computes a graph convolutional to obtain the mean and standard deviation values, similar to the output of SAC \citet{haarnoja2018soft}. We also include another head which functions as a critic. The critic head approximates the Q-function of the policy and is used to train the network. As in SAC, we also keep a target critic, which is updated based on polyak averaging.
\begin{align*}
\mathbf{h}_{(i, j\rangle}^{t} &=\sum_{k} z_{i j, k} f_{e}^{k}\left(\left[\mathbf{x}_{i}^{t}, \mathbf{x}_{j}^{t}\right]\right) \\
\boldsymbol{\mu}_{j}^{t+1}=f_{\mu}\left(\sum_{i \neq j} \mathbf{h}_{(i, j)}^{t}\right); \qquad & \qquad      \boldsymbol{\sigma}_{j}^{t+1}=f_{\sigma}\left(\sum_{i \neq j} \mathbf{h}_{(i, j)}^{t}\right);\\
\underbrace{p\left(\mu_{j}^{t+1}, \sigma_{j}^{t+1} \mid \mathbf{x}^{t}, \mathbf{z}\right)}_{\text{G-SAC Output}}&=\mathcal{N}\left(\boldsymbol{\mu}_{j}^{t+1}, \sigma^{2} \mathbf{I}\right)
\end{align*}

\subsection{Trajectory Forcing}
\begin{wrapfigure}{r}{0.4\linewidth}
\vspace{-20pt}
\includegraphics[width=\linewidth]{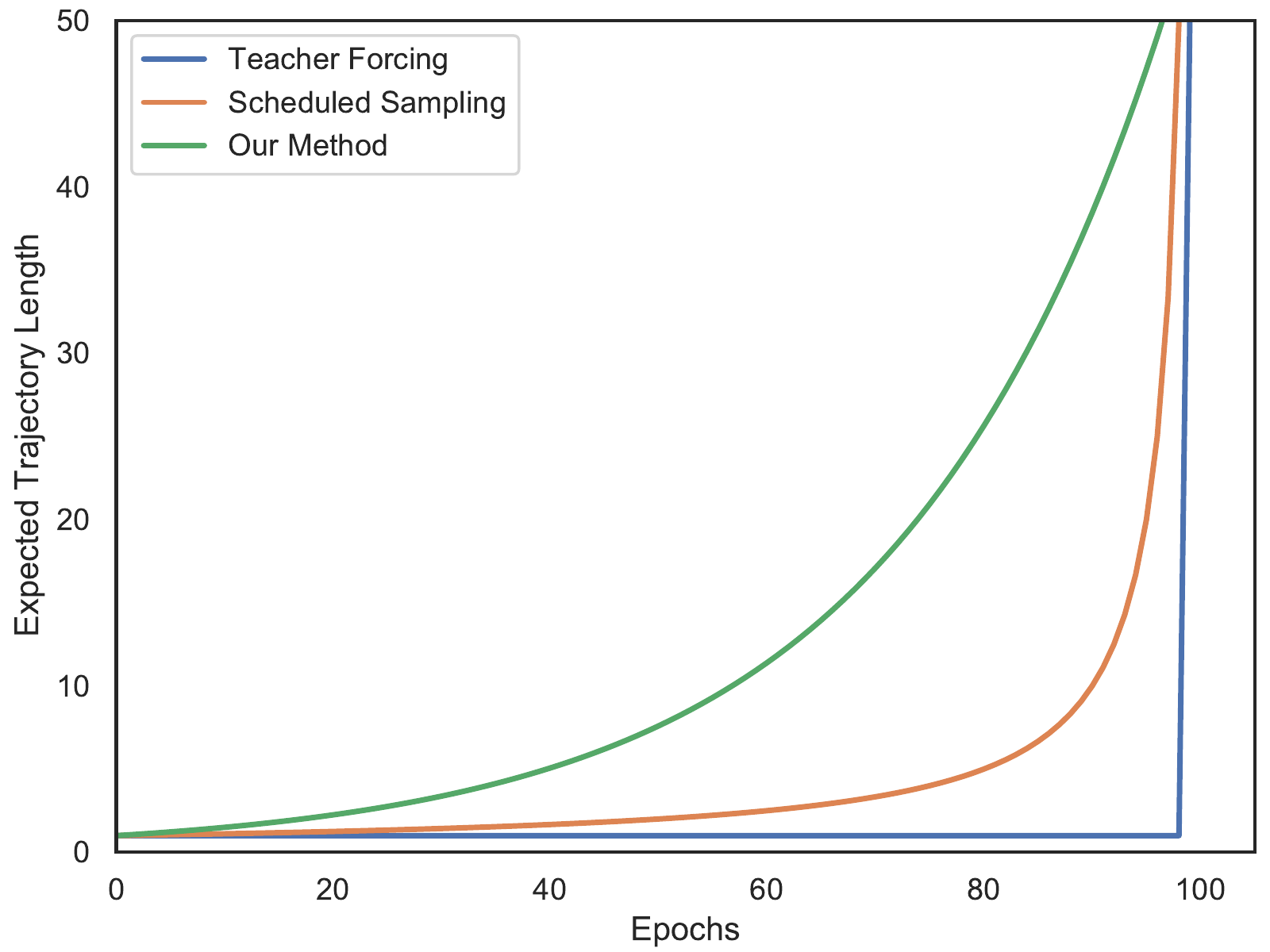}
\caption{We plot the expected length of the generated trajectory during training in between pedagogical intervention for teacher forcing, scheduled sampling and trajectory forcing. We show that our curriculum provides a gradual increase in expected trajectory lengths for better policy training.}
\end{wrapfigure}
Teacher Forcing based curriculums are an important tool for training sequence models \citep{ranzato2015sequence}. In a sequence generation setting, we expect the learned models to use the outputs of the previous timesteps as inputs. However, training such generative models was shown to be sensitive to weight intitialization \citep{yu2017seqgan}, in the absence of pedagogical intervention, such as teacher forcing. Scheduled Sampling \citep{bengio2015scheduled} is a curriculum based on teacher forcing that attempts to gradually transition from using the ground-truths as inputs, to using the model's previous outputs. NRI and DNRI use teacher forcing in their approaches; however, such an approach could result in compounding errors during test time, due to the distributional shift. The use of such curriculums is not prevalent in AIL, as the frameworks accommodate only for on-policy updates. There are methods that try to work around this by pre-training using BC \citep{jena2020augmenting}. However, in SS-AIL, the off-policy functionality is built into the formulation. Thus, we can leverage $\alpha$ values close to 1 and use them as a proxy for teacher forcing. Our goal is to take advantage of the exploration functionality of our loss function to gradually reduce the teacher forcing frequency to zero. In other words, our curriculum progressively increases the intervals of pedagogical intervention. We show this in Section 4.5. 

We mathematically model the frequency of interventions as $1.5^{-\text{epoch}/\beta}$, where beta is a hyperparameter that assists in generating a progressively increasing sequence.
Our intuition behind modeling the length of the generated trajectory as an exponential with respect to the epoch, is that we intend on doubling the size of the generated trajectory every $\beta$ epochs. A linear model would result in reduced sample efficiency during the later stages of training, as the  generalizability of the model would outpace the curriculum. The exponential increase ensures efficient data utilization.
\pagebreak

\section{Results}
\label{experi}
In this section, we quantitatively evaluate the strengths of our framework using two custom-built environments and a real-world dataset. These experiments illustrate the advantages of the SS-MAIL framework, namely: i) increased stability of its training dynamics, ii) multi-modal trajectory modeling capabilities, iii) increased sample-efficiency of Trajectory Forcing, iv) enhanced versatility in reward-shaping using self-supervision, and v) robustness to compounding errors on real-world datasets.

\subsection{Experimental Setup}
To evaluate the capabilities of our methods against prior SOTA baselines, we evaluate our framework on one custom-built environment (Y-Junction), and one real world prediction task (Noisy Mocap). To minimize the prevalence of confounding variables, the experiment design of Y-Junction was simplified considerably to ensure that any promising improvement in performance is warranted by the inherent attributes of the frameworks, and not any other external confounding factors, such as variable overflow \citep{dietz2015understanding}. The Y-Junction environment is a trajectory forecasting environment that tests the multi-modal capabilities of AIL and BC based frameworks by simulating a 3-way Y-Junction scenario. The environment has three agents that follow one another on a one-way street until the beginning of a fork or Y-Junction. Then the lane splits into two, thus simulating two potential modes. Our goal is to pick one of the two lanes or modes. We provide more details in the appendix. The Noisy Mocap prediction task involves training a multi-agent policy on the CMU Motion Capture Dataset \citep{mocap} for subject \#35, and observing the zero-shot generalization in the presence of noisy inputs.

\subsection{Stability in Training Dynamics}
\begin{wrapfigure}{r}{0.4\linewidth}
\vspace{-30pt}
\includegraphics[width=\linewidth]{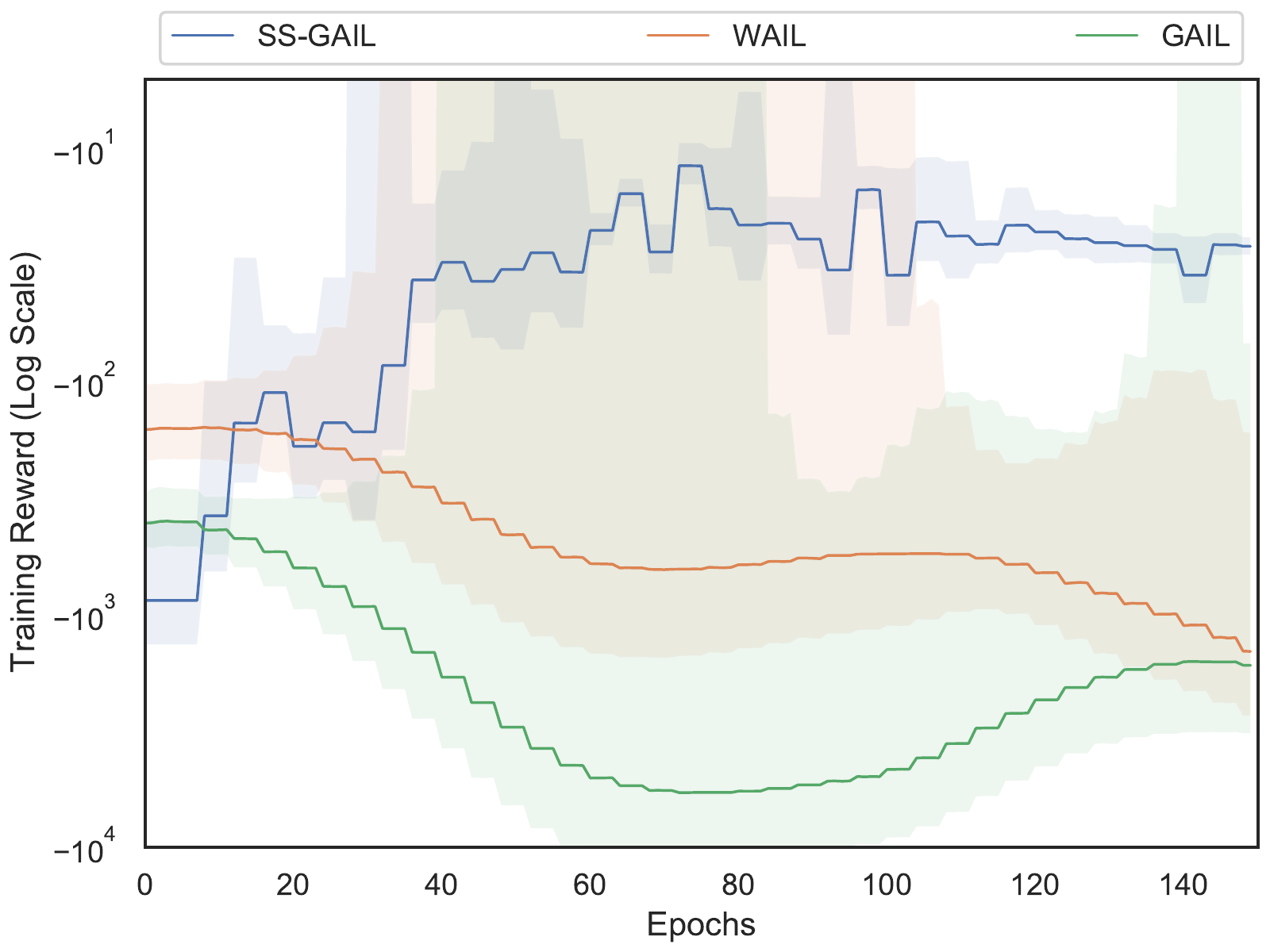}
\caption{Training error over time in the Y-Junction environment shows that SS-AIL successfully imitates the expert. The low standard deviation of error for SS-AIL despite unfavorable initialization demonstrates the increased robustness of its training dynamics.}
\vspace{-10pt}
\end{wrapfigure}
Robustness of the training algorithm to initialization, stochasticity in the environment, and training data is very important to assure predictability in the training dynamics. The Y Junction experiment, upon which we evaluate the robustness, provides a simple multi-agent testing scenario to illustrate the differences between various AIL approaches. To have a uniform evaluating strategy, we use the same architecture for all methods, and solely swap-out the corresponding AIL loss functions. We elaborate further in the Appendix.

In Fig. 4, we observe that the training loss of SS-AIL reduces to zero, implying that the algorithm is able to successfully imitate the expert during training. SS-AIL quickly converges  to the expert policy for all initializations; however, GAIL and WAIL are unable to converge despite good policy initializations. We see that GAIL and WAIL start-out with better performances, due to their policy initializations; however, are unable to consistently improve and fluctuate considerably. As the Y-Junction experiment is unbounded in its state-space, there is no boundary that constrains diverging policies, thus resulting in the observed exploding losses. In contrast, we see that the performance of the policy trained with SS-AIL consistently improves, and converges to zero. These results can be attributed to the richer family of reward functions being approximated by the Discriminator. The Binary Cross Entropy loss of GAIL \citep{ho2016generative} results in sparse reward signals for the policy training, thus resulting in unstable training. Furthermore, the inablility of the WAIL Discriminator to converge upon a designated output surface results in a non-stationary value-function, thereby destabilizing the training of the critic. This instability becomes more pronounced as the policy and expert approach each other in the space of trajectories, as small variations in the surface of the discriminator output would lead to prominent shifts in the value-function. 


\pagebreak

\subsection{Multi-Modal Capabilities}
\begin{wrapfigure}{r}{0.4\linewidth}
\vspace{-30pt}
\includegraphics[width=\linewidth]{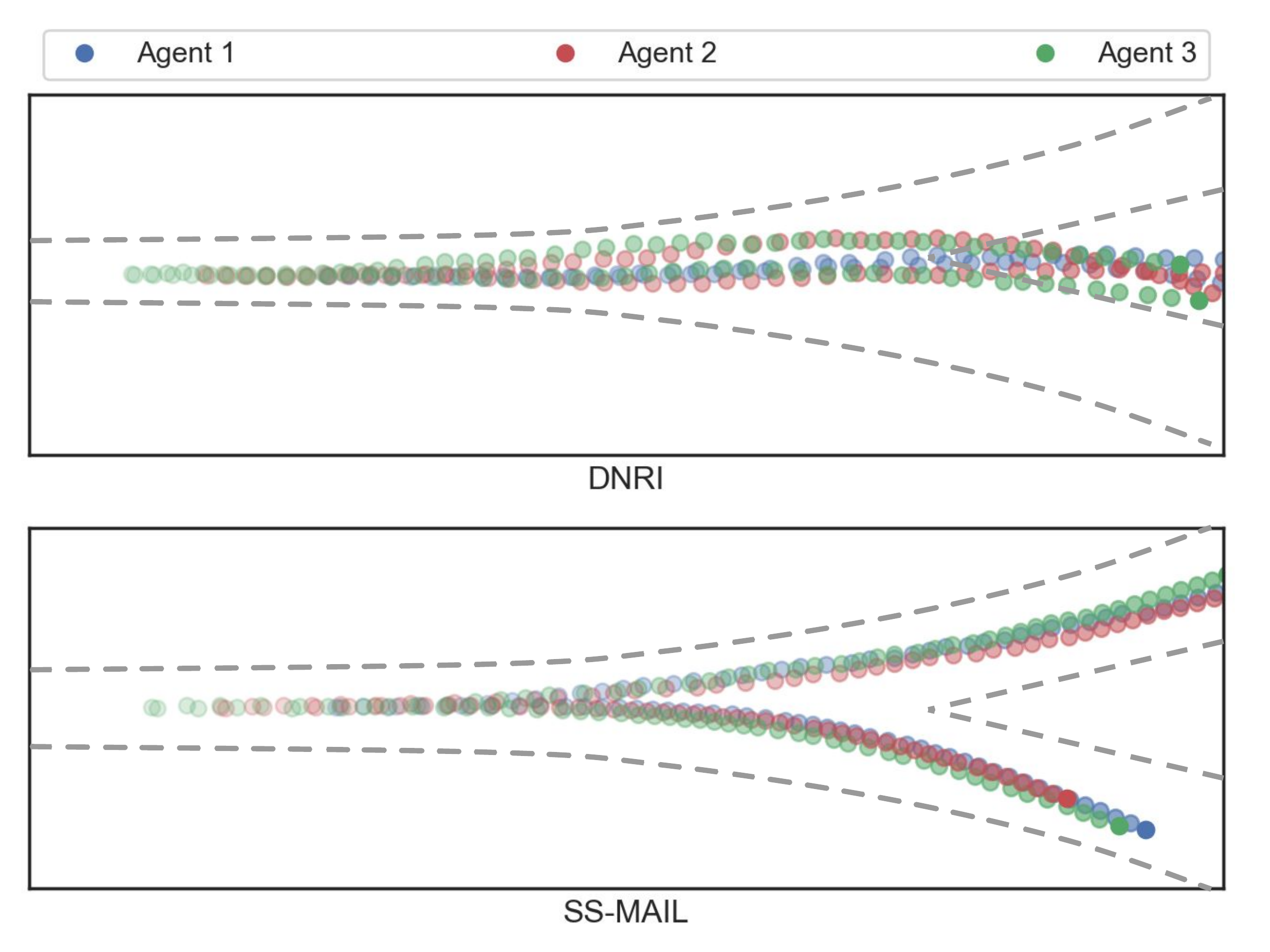}
\caption{A visualization of the multi-modal trajectories learned by SS-AIL and DNRI on the Y-Junction task. We observe that DNRI averages over the different modes, while SS-AIL successfully differentiates between them. The inability to distinguish between multi-modal expert trajectories can prove disastrous in continuous state settings, as this may lead to visiting states that differ considerably from the expert state-distribution.}
\vspace{-10pt}
\end{wrapfigure}
Differentiating between the various modes in the expert-trajectory dataset is essential for training multi-modal policies. The gradient of the policy training loss should be in the direction of a specific mode, instead of the weighted average over all the modes. The Y Junction example provides a simple multi-agent testing scenario that demonstrates how BC based methods learn sub-optimal policies in the presence of multiple modes. To evaluate SS-AIL and DNRI on their multi-modal modeling capabilities, we ensure both algorithms have similar policy architectures and differ solely in their BC and AIL based policy training steps.

We plot the trajectories in Fig 5, and illustrate issue of modal averaging, observed in BC methods. We observe that SS-AIL converges to the different expert modes, implying that the Discriminator successfully approximates a reward function that has a discernible preference among prospective modalities. We do not observe this property in BC based DNRI algorithm, as shown in Fig. 5. The self-supervised loss function has an inherent positive feedback loop that progressively increases the gradients of the rewards for imitating the closest expert modality. This property is a direct consequence of the self-supervised loss, as the slope of the reward is inversely proportional to the distance between the policy and expert trajectories.

\subsection{Sample Efficiency and Curriculums}
\begin{wrapfigure}{r}{0.4\linewidth}
\vspace{-25pt}
\includegraphics[width=\linewidth]{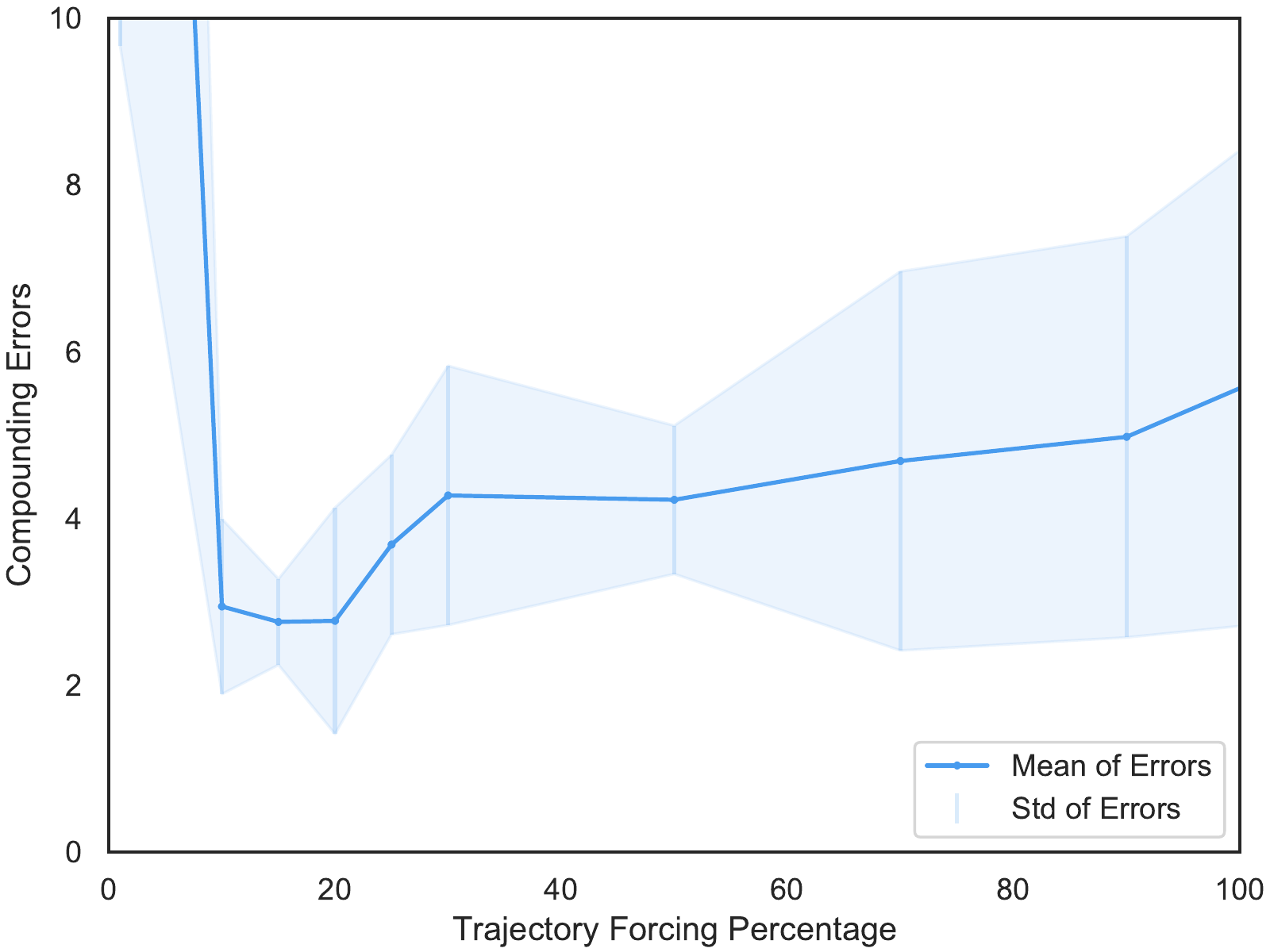}
\caption{Running an ablation, over $\beta$, for the mean and standard deviation of testing error in the Y-Junction environment illustrates the existence of $\beta$ values that result in both low mean and standard deviation.  This highlights Trajectory Forcing's ability to improve the robustness to unfavorable weight initializations, increase the sample efficiency, as well as eliminate the issue of compounding errors.}
\end{wrapfigure}
Teacher Forcing alleviates the issues of unfavorable initializations and low sample efficiency during the initial stages of training; however, it results in compounding errors during testing. The Y Junction experiment is a simple multi-agent scenario that demonstrates how Trajectory Forcing alleviates unfavorable initializations, increases sample efficiency during the initial stages of training, and eliminates the issue of compounding errors. To evaluate the effectiveness of the Trajectory Forcing curriculum, we train SS-AIL with different values of $\beta$ to show an ablation over $\beta$, which ranges from no Teacher Forcing (0\%), seen in AIL approaches, to complete Teacher Forcing (100\%), seen in BC approaches. In Figure 6, we focus on two aspects of the plot, the mean and standard deviation. A smaller mean loss, averaged over multiple random seeds, implies robustness in policy training to different model initializations. Moreover, a low standard deviation implies superior generalization during testing, and thereby lower compounding errors. we see that without teacher forcing the mean and standard deviation of the testing loss is considerably high, which is common for AIL methods. Teacher Forcing reduces the mean loss considerably; however, the standard deviation is high due to the compounding errors.
We observe that for a $\beta$ value of around 15\%, the trajectory forcing curriculum considerably reduces the mean loss as well as the standard deviation during testing.
This demonstrates the robustness to weight initializations, improvement in sample efficiency, and reduction in compounding errors.









\pagebreak

\subsection{Zero-Shot Generalization on Noisy Real-World Data}
\begin{wrapfigure}{r}{0.4\linewidth}
\vspace{-10pt}
\includegraphics[width=\linewidth]{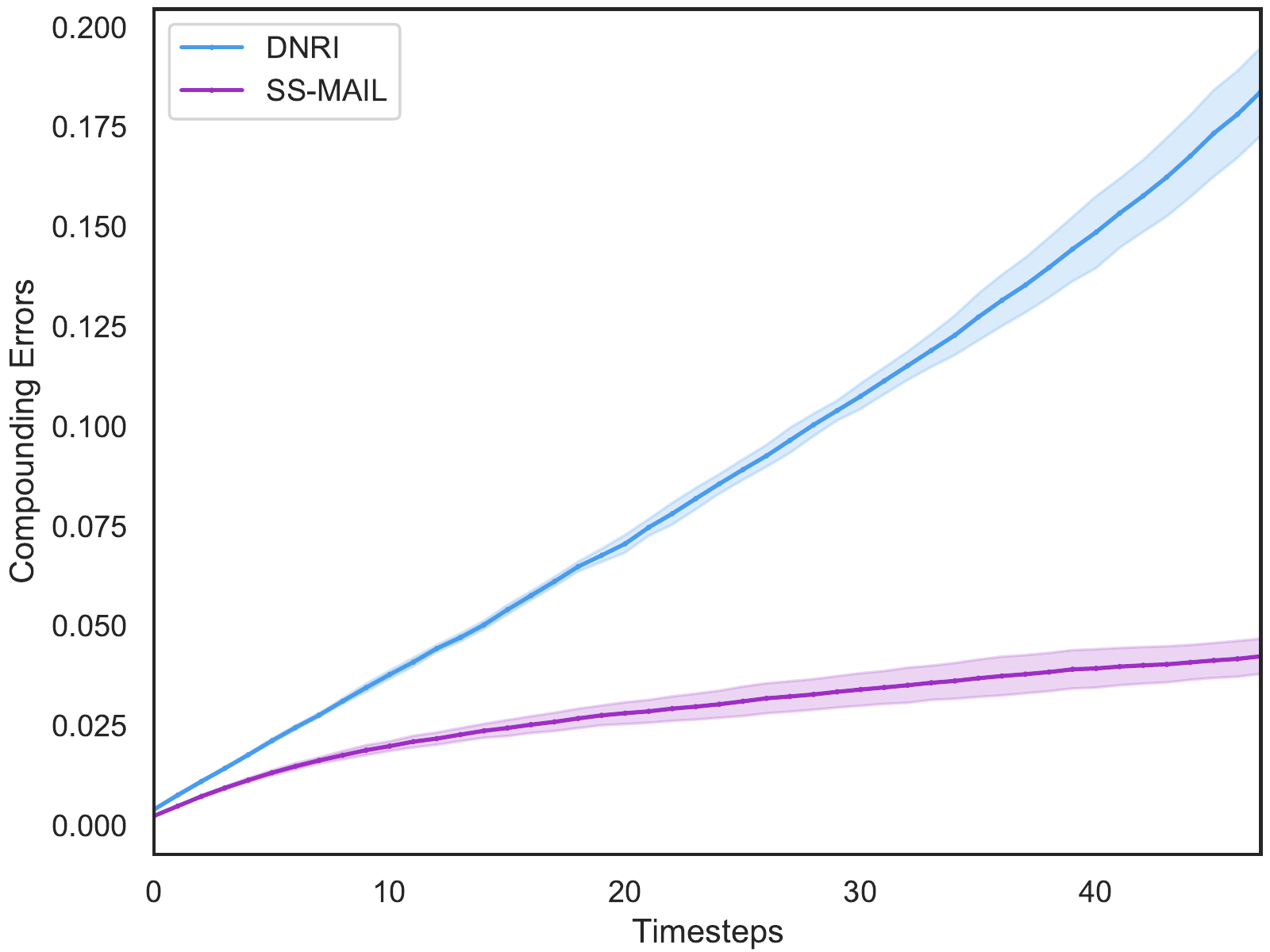}
\caption{We plot the Compounding Errors over time for the Noisy Mocap environment shows that SS-AIL  successfully  compensates for noisy inputs during test time, zero-shot. The low standard deviation and the decrease in slope for SS-AIL illustrate its generalization capabilities.}
\vspace{-10pt}
\end{wrapfigure}
Compounding errors occur when the model is unable to correct for deviations from the training distribution of states, and fails to compensate for the observed deviations. This can be attributed to poor generalization, and is commonly caused by noise.
The Noisy Mocap experiment is a real-world multi-agent prediction dataset that demonstrates how negligible amounts of Gaussian noise, with standard deviation of 0.05, can accumulate and permanently derail the generation of trajectories.
To evaluate the issue of compounding errors in the presence of noise during test time, we evaluate SS-AIL and DNRI using similar policy architectures. 

In this experiment, we observe that DNRI progressively deviates further from the expert trajectories as the trajectory length increases. We also observe a positive correlation  between the standard deviation and the trajectory length. However, in the case of MAIL we do not observe a similar linear increase in the compounding errors. Moreover, the slope decreases with the increase in trajectory length.  This observed deviation can be attributed to the loss functions used to train the respective algorithms. BC approaches are trained to precisely match the expert. Despite having a lower validation loss, generalizing zero-shot to noisy environments is not guaranteed, as seen in Fig 7. SS-AIL, on the other hand, learns a policy that maximizes the cumulative discriminator return. Thus, the trained policy is akin to a vector field surrounding the expert state-distribution, as it learns to optimize for reaching rewarding states, thereby resulting in robust recovery even in noisy environments.

\subsection{Reward Shaping using Self-Supervision}
\begin{wrapfigure}{r}{0.4\linewidth}
\vspace{-20pt}
\includegraphics[width=\linewidth]{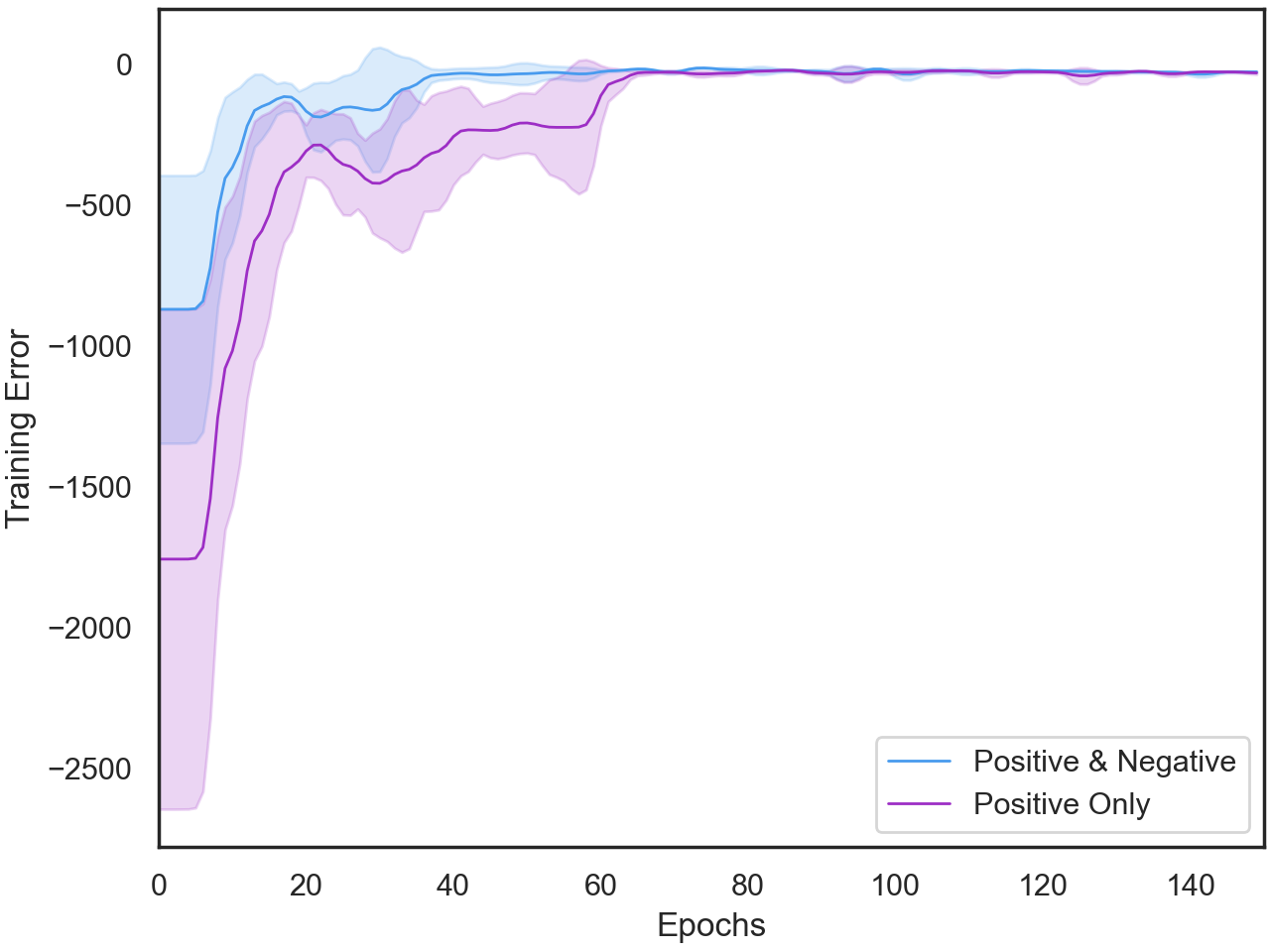}
\caption{We plot the training loss over epochs for the Noisy Mocap environment and show that sampling negative $\alpha$ values improves the final loss and speeds up training. This can be attributed to the richer reward gradient in the local neighborhood of the current trajectory.}
\vspace{-10pt}
\end{wrapfigure}
Reward-Shaping provides engineers and scientists with a way to meticulously control the reward function, resulting in more efficient policy training and reduction in training time. In our method, the self-supervised sampling allows for precise control over the reward function the discriminator is approximating. In this experiment, we investigate one such customization of the reward function that results in enhanced training dynamics. The Noisy Mocap experiment provides us with a real-world dataset to investigate the ramifications of different sampling procedures on the training dynamics of SS-AIL.
To evaluate the effect of the self-supervised sampling, we keep the overall SS-AIL algorithm constant and solely change the sampling distributions of $\alpha$.

We see that sampling only positive values of $\alpha$ results in training dynamics that take longer to converge. We observe that with the introduction of negative $\alpha$ values, the training dynamics substantially improves. 
This demonstrates the importance of reward shaping using self-supervision, which opens up the possibilities for researchers to design more intricate reward function approximators.
The improved performance observed upon the introduction of negative $\alpha$ values can be attributed to the resulting completion of the reward function in the local neighborhood of the current trajectory. As the current trajectory corresponds to an $\alpha$ value of zero, then the absence of negative $\alpha$ values during self-supervised sampling would result in an incomplete reward function in the local neighborhood of the policy's state-distribution. 

The inclusion of negative $\alpha$ values is one such example of reward-shaping for enhanced performance and stable training dynamics. Based on the application, engineers and scientists can design custom rewards for the enhancing the policy training, based on the application constraints.

\section{Related Work}
In this section we provide a brief overview of the various research contributions that share similarities to the approaches described in our work. We start by discussing the recent work on graph-structured data, and then delve into multi-agent trajectory forecasting, and finally, imitation learning.

Graph structured data does not possess a fixed structure, and thus can not be used with traditional neural networks. To address these issues, there have been approaches such as \citep{li2015gated}, \citep{scarselli2008graph}, and \citep{kipf2016semi} that introduce the notion of message passing to accommodate for the irregular, yet structured nature of graphical data. In this paper, we use  Graph Convolutional Networks \citep{kipf2016semi} for our internal message passing; however, there are other methods proposed, such as Graph Attention Networks \citep{velivckovic2017graph}, and Graph Recurrent Networks (GRN) \citep{hajiramezanali2019variational} as prospective forms of message passing. GRN also provides the notion of recurrence, which can be used to model time-series data. The Graph Convolutional RL \citep{jiang2018graph} work is a relevant bridge between graph structured data and the sequential decision making setting, which employs message passing to deal with multi-agent reinforcement learning.
Finally, we move on to the multi-agent trajectory prediction task. This setting provides is an interesting application of message passing, as the nature of the interactions among the agents may or may not be known a priori. Some recent work are:
Evolve Graph \citep{li2020evolvegraph}, Social Attention \citep{vemula2018social}, SocialGAN \citep{gupta2018social}, and STGAT \citep{huang2019stgat}.

Adversarial imitation learning methods train a discriminator network to estimate the divergence between the expert and the policy's state distribution, and use this discriminator for policy improvement. \citet{ghasemipour2020divergence} provide an unified perspective on this family of algorithms and show that they can be used to minimize any $f$-divergence between the expert and policy state distributions. \citet{fu2017learning} extend this line of work, and present an inverse reinforcement learning algorithm based on this adversarial setup. \citet{ni2020f} move away from this adversarial setup, and present an IRL algorithm that directly optimizes the reward function to minimize any $f$-divergence between the expert and policy state distributions. Another line of work focuses on utilizing meta-learning methods to enable few-shot imitation or reward inference \citep{yu2019meta, xu2019learning}.

\section{Conclusion}
We introduced SS-MAIL to effectively model multi-agent experts and improve the stability of their training dynamics compared to previous AIL methods. Our method comprises of:

\begin{enumerate}
  \item SS-AIL: a self-supervised cost-regularized apprenticeship learning framework that has considerably more stable training dynamics compared to previous AIL methods, while also providing the flexibility for reward-design.
  \item MAIL: A graph convolutional multi-agent actor-critic framework, which aids in multi-modal trajectory generation and alleviates compounding errors.
  \item Trajectory Forcing: a teacher forcing based curriculum that takes advantage of our SS-AIL formulation to stabilize the training dynamics and alleviates the issue of domain shift between training and testing.
\end{enumerate}


The improved stability of the training dynamics and the increased sample efficiency of SS-MAIL allows us to train policies that are robust to weight initializations, which would enable efficient training of multi-agent interactions. Moreover, the ability to handle compounding errors would ensure enhanced generalization during test time, in the presence of noisy inputs. Finally, learning from expert imitation provides a useful framework to train control policies in the absence of the ground truth reward function, and SS-MAIL can be applied to such multi-agent control setting, such as autonomous driving and robotic interactions to learn proficient policies that achieve human-level performance.

\section{Reproducibility}

This section describes our efforts towards ensuring reproducibility of our experiments. First, we provide anonymized code for reproducing all of our experimental results. The code contains documentation for how to generate and plot each experimental result in this paper. In addition, we also provide details about our experimental setup, training hyperparameters, and datasets used in the appendix, which should be sufficient for an independent implementation. All of our experiments are done on at least five seeds to ensure reproducibility. We use author-provided implementations of DNRI and SAC to ensure consistency. Our experiments were done using a single Quadro RTX 6000 GPU, and therefore, are reasonably easy to reproduce within a small computational budget.

\bibliography{iclr2022_conference}
\bibliographystyle{iclr2022_conference}

\appendix
\section{Appendix}
In this section, we provide the derivation and the proof for the Theorem 3.1, to establish the connection between Apprenticeship learning and SS-AIL.
\subsection{Proof}
Ho and Ermon show that the cost-regularized apprenticeship learning problem
\begin{align}
\operatorname{AL}_{\psi}\left(\pi_{E}\right)= \min_\pi \max_c -\psi(c) + E_{\pi} [c(s,a)] - E_{\pi_E}[c(s,a)]
\end{align}
the above is equivalent to min-max problem

\begin{align}
\min_{\pi} \psi^*\left(\rho_{\pi}, \rho_{E}\right)
\end{align}
where $\psi^*$ is the convex conjugate of the cost regularizer $\psi$.

For SS-AIL, 

\begin{align}
\psi_{\mathrm{SS}}^{*}\left(\rho_{\pi}-\rho_{\pi_{E}}\right)= \mathbb{E}_{\pi, \pi_E, \alpha}\left[ 
\underbrace{\left(0-D(s_G,a_G)\right)^2}_{\text{Generated MSE}} + 
\underbrace{\left(1 - D(s_E,a_E)\right)^2}_{\text{Expert MSE}} + \right.
\left. \underbrace{\left(\alpha - D(s_\alpha,a_\alpha)\right)^2}_{\text{Self-Supervised MSE}}  
\right] 
\end{align}

The SS-AIL algorithm aims to solve the min-max optimization problem $\max_\pi \min_D \mathbb{E}_{\pi, \pi_E, \alpha}[ 
\left(0-D(s_G,a_G)\right)^2 + 
\left(1 - D(s_E,a_E)\right)^2 + 
\left(\alpha - D(s_\alpha,a_\alpha)\right)^2 ]$ using the gradient descent-ascent algorithm. The inner discriminator optimization is a supervised learning problem with the self-supervised loss. The outer policy optimization is an off-policy policy improvement problem, and we use the soft actor-critic algorithm.

Therefore, SS-AIL is an instantiation of cost-regularized apprenticeship learning with the cost regularizer $\psi_{SS}$.

\subsection{Further Elaboration on Self-Supervision for Reward Shaping}
We start by elaborating on the self-supervised sampling used to create Fig 1. We observed that sampling $\alpha \in [-1,1]$ was not sufficient for reproducing a drop in the reward function after the expert trajectories. Thus, we increased our sampling range to $[-1, 1.5]$ and set the reward function to 0 for values of $\alpha >1$. This helped our discriminator learn a reward function that does not monotonically increase as we get farther from the current policy. 

We can always split our sampling procedure to get different results. We can make use of the non-linearity of the neural-network architectures to even approximate piece-wise distributions. Another extension could be to have steeper slopes farther away from the current policy and smoother slopes closer to the policy state-distribution. 

\section{Implementation Details}
We begin by delineating all the experimental details that are common to each of the experiments. Our code is built on Pytorch version 1.2, and we use Adam as our optimizer of choice. We follow the teacher forcing training regiment provided in DNRI \citep{graber2020dynamic} to train our baseline reference models for comparison. 

We save the best models after every epoch, and use the models with the best validation loss for testing. Similar to \citet{graber2020dynamic}, we normalize our inputs between -1 to 1 using the maximum and the minimum state norms. For all the experimental evalutaions, we average the results over 5 seeds.
\subsection{Code}
We have provided a driver code in the supplementary material, and the specific hyperparameters in separate folders, one for each figure.
\subsection{Y-Junction Experimental Setup}
\begin{figure}
    \centering
    \includegraphics[width=\linewidth]{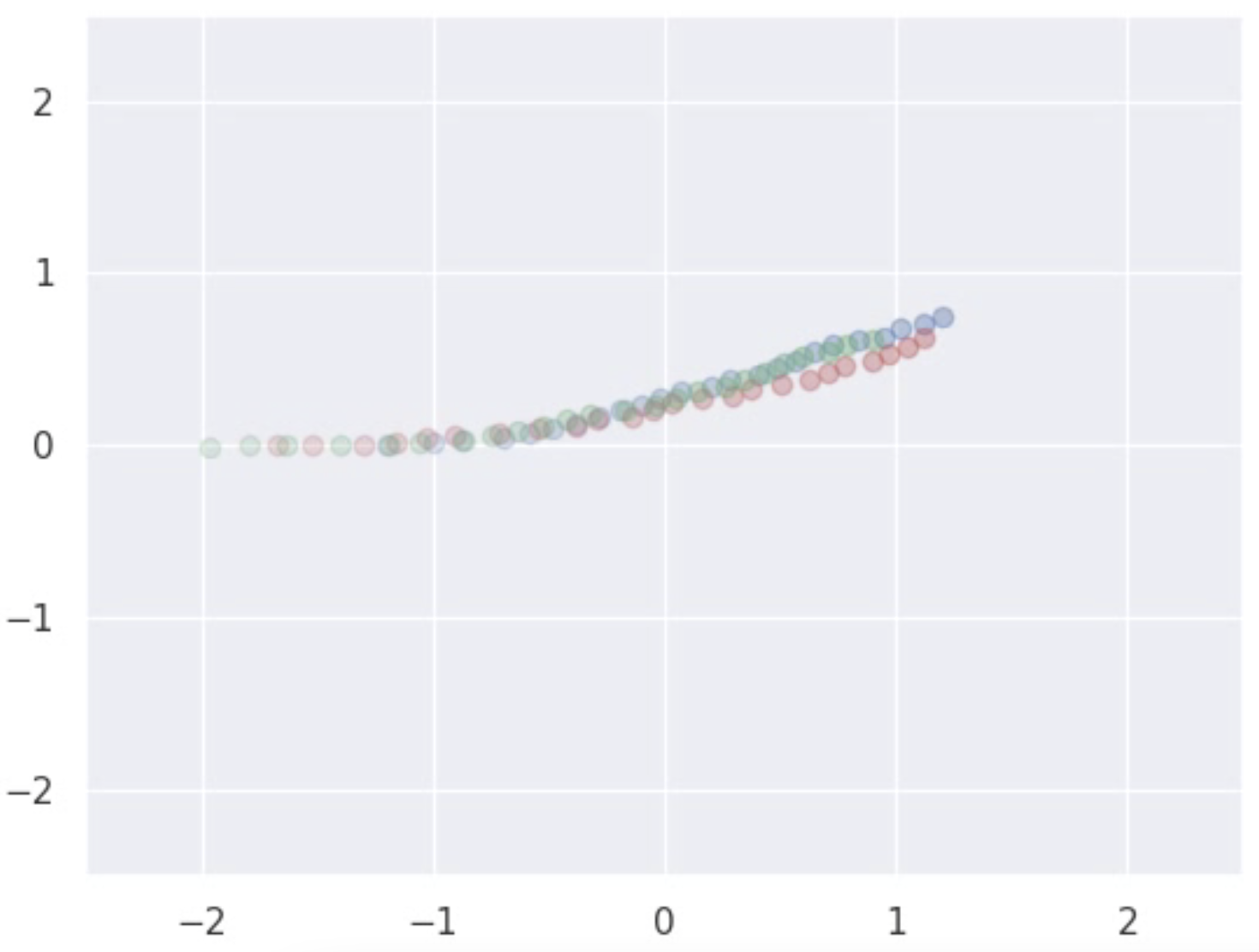}
    \caption{Sample snapshot of the Y-Junction Experiment}
    \label{fig:yjunc}
\end{figure}

We expand upon the simulated synthetic experiments of  \citep{graber2020dynamic}, and use the same number 

\subsection{NOisy Mocap Setup}
For noisy mocap, we add gaussian noise of 0.05 standar deviation.

\end{document}